\newtheorem{lemma}{Lemma}
\begin{document}

\title[Article Title]{An Incentive Mechanism for Federated Learning Based on Multiple Resource Exchange}


\author[1,2]{\fnm{Ruonan} \sur{Dong}}\email{dongrn@cumt.edu.cn}\equalcont{These authors contributed equally to this work.}

\author[1,2]{\fnm{Hui} \sur{Xu}}\email{xuhui@cumt.edu.cn}
\equalcont{These authors contributed equally to this work.}

\author[1,2]{\fnm{Han} \sur{Zhang}}\email{hanzhangl@cumt.edu.cn}

\author*[1,2]{\fnm{GuoPeng} \sur{Zhang}}\email{gpzhang@cumt.edu.cn}

\affil[1]{\orgdiv{Engineering Research Center of Mine Digitalization, Ministry of Education}, \orgname{China University of Mining and Technology}, \orgaddress{ \city{Xu Zhou}, \postcode{221116},  \country{China}}}

\affil[2]{\orgdiv{School of Computer Science and Technology}, \orgname{China University of Mining and Technology}, \orgaddress{ \city{Xu Zhou}, \postcode{221116}, \country{China}}}



\abstract{Federated Learning (FL) is a distributed machine learning paradigm that addresses privacy concerns in machine learning and still guarantees high test accuracy. However, achieving the necessary accuracy by having all clients participate in FL is impractical, given the constraints of client local computing resource. In this paper, we introduce a multi-user collaborative computing framework, categorizing users into two roles: model owners (MOs) and data owner (DOs). Without resorting to monetary incentives, an MO can encourage more DOs to join in FL by allowing the DOs to offload extra local computing tasks to the MO for execution. This exchange of "data" for "computing resources" streamlines the incentives for clients to engage more effectively in FL. We formulate the interaction between MO and DOs as an optimization problem, and the objective is to effectively utilize the communication and computing resource of the MO and DOs to minimize the time to complete an FL task. The proposed problem is a mixed integer nonlinear programming (MINLP) with high computational complexity. We first decompose it into two distinct subproblems, namely the client selection problem and the resource allocation problem to segregate the integer variables from the continuous variables. Then, an effective iterative algorithm is proposed to solve problem. Simulation results demonstrate that the proposed collaborative computing framework can achieve an accuracy of more than 95\% while minimizing the overall time to complete an FL task.}

\keywords{Federated learning, Cooperative computation, Task offloading, Resource allocation.}



\maketitle

\section{Introduction}\label{sec1}

In recent years, artificial intelligence (AI) and machine learning (ML) have witnessed tremendous success across various applications, including computer vision, recommender systems, and natural language processing. Typically, training AI models demands significant computing power and large amount of data samples, and is often carried out in centralized clouds or data centers equipped with robust computing and storage capabilities \cite{8847416}. Advancements in the Internet of Things (IoT)\cite{2020Deep} and 5G network technologies have bestowed user devices with sophisticated sensing, communication, and computing capabilities. User devices can execute more intricate tasks and collaboratively sense information in diverse contexts. Such tasks hinge on extensive data analysis and model training. However, traditional centralized learning proves ineffective for training AI models due to the high cost associated with collecting large amounts of user data, leading to extremely heavy traffic loads in communication networks. Moreover, the concerns of individual privacy and information security add to the complexity of sharing user data.

In response to these challenges, Google Inc. \cite{2016Communication} introduced the Federated Learning (FL) in 2016, presenting a distributed learning paradigm. In the framework of FL  \cite{9084352}, user devices can maintain their private data and update model parameters through local training, thus enabling data isolation and eliminating the need to transmit raw data to a central server. In detail, the user devices participating FL are required to train an AI model locally and update the model parameters. The local updates are accomplished after multiple iterations to hasten convergence. Following the completion of local training \cite{9060868}, users upload the model parameters to a model server. Then, the model server, in turn, aggregates the local model parameters from all users to derive the updated global model parameters.

Although FL may result in some performance degradation, it is typically deemed acceptable as data privacy \cite{ 2019Federated} is assured. However, the resource-intensive nature of model training and parameter uploading poses challenges in terms of computing and communication resources for user devices. In addition, the diverse computational capabilities of different user devices may pose implementation challenges in resource allocation problems \cite{9509294}.
Therefore, the application scenario of FL aligns closely with resource-constrained network edges. 

In order to alleviate the scarcity of computing resources for user devices and data scarcity for model servers, we propose a resource sharing method based on the resource exchange between a model owner (MO) and multiple data owners (DOs). The MO and DOs just correspond to an edge computing server and multiple user devices in edge environment. To incentivize the active participation of DOs, the MO should address the challenges of data and computing resource sharing by undertaking a portion of the computing loads of DOs as the compensation for the DOs to train local model. The contribution of this paper is as follows:
\begin{itemize}
    \item[1)] We formulate the interaction between MO and DOs as an optimization problem, and the objective is to effectively utilize the communication and computing resource of the MO and DOs to minimize the time to complete an FL task.
    \item[2)] The proposed problem is a mixed integer nonlinear programming (MINLP) with high computational complexity. We first decompose it into two distinct subproblems, namely the client selection problem and the resource allocation problem to segregate the integer variables from the continuous variables. Then, an effective iterative algorithm is proposed to solve problem.
    \item[3)] Simulation results demonstrate that the proposed collaborative computing framework can achieve an accuracy of more than 95\% while minimizing the overall time to complete an FL task.
\end{itemize}

The rest of the paper is organized as follows: 
Section II reviews the relevant literature for this paper, providing a comprehensive overview of existing studies. In Section III, the system model is presented, followed by the formulation of the time optimization problem in Section IV. Section V delves into the analysis of the problem and presents the optimal solution. Section VI assesses the system's performance through simulation, while the concluding insights are encapsulated in Section VII.

\section{Related works}\label{sec2}
In order to solve the resource allocation problem in federated learning, many scholars have proposed game-theoretic \cite{2004An} approaches and introduced incentives \cite{9247530} between mobile edge servers and users to reach a Nash equilibrium for a relatively desirable outcome. 
Authors such as G. Xiao \cite{9359188} and L. Dong \cite{9299689} have employed the Stackelberg game model to collectively investigate utility maximization. In the former's work, a federated learning framework based on crowdsourcing was designed, establishing an incentive mechanism for multiple devices to engage in federated learning. Clients participating in the process and mobile edge servers interact through an application platform to construct high-quality learning models. In the latter study, a market-oriented approach was considered to motivate users to participate in federated learning, integrating their computational energy consumption into the model.
Y. Jiao \textit{et al}. authors \cite{9094030} introduced an auction-based market model to incentivize data owners' participation in federated learning. They devised two auction mechanisms: a reverse multidimensional auction mechanism and a deep reinforcement learning-based auction mechanism, aiming to maximize the social welfare of the federated learning service market.

Some scholars have proposed offloading local tasks to the edge so that clients can complete federated learning tasks more efficiently. 
S. Zhou \textit{et al}. \cite{2021Machine} conducted an in-depth study and analysis of offloading strategies for lightweight user-mobile edge computing tasks using a machine learning approach. 
J. Ren \textit{et al}. authors \cite{8728285} introduced a computation offloading scheme based on federated learning for edge computing-enabled IoT platforms. This scheme aims to augment the constrained capacity of IoT devices by offloading computationally intensive tasks to edge nodes. The real-time determination of computation offloading decisions involves addressing complex resource management challenges. Additionally, to minimize the transmission cost between IoT devices and edge nodes, agents undergo distributed training using a federated learning mechanism.
S. Kim \textit{et al}. \cite{9590495} explored a federated learning-based tactical edge network platform. In this setup, each individual device autonomously decides on offloading strategies for its tactical tasks within a resource-constrained network environment. The motivation behind these decisions is driven by model training, with the overall objective of efficient collaboration in the execution of a multitude of computational tasks.

\section{System Model}

\vspace{-4 mm}
\begin{figure}[H]
  \centering
  \includegraphics[scale=.3]{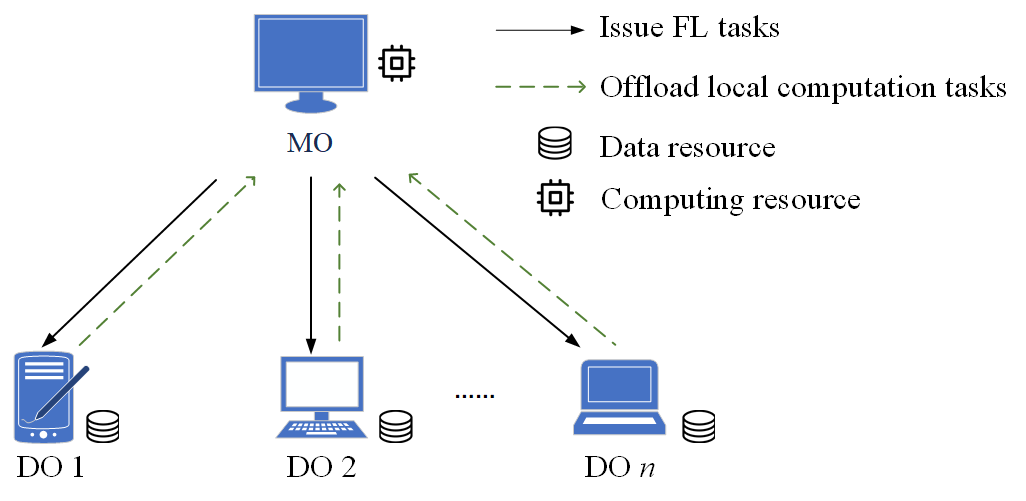} 
  \vspace{-2 mm}
  \caption{The system model of federated learning.} \label{Multi-User Cooperative}
\end{figure}
\vspace{-3 mm}

The considered FL system is illustrated in Fig.\ref{Multi-User Cooperative}. There exists an MO and a set $\mathcal{N}=\left \{1,...,N  \right \} $ of $N$ DOs, and the MO can communicate with each of the DOs via direct wireless links. The MO has an AI model to train and holds a certain amount of computing power, while each DO has a certain amount of data samples. The MO lacks data for training the AI model but has a large amount of idle computing resources, while each DO holds a large amount of data sample but lack local computing resources.
To protect data privacy, the MO can employ FL to utilize the data of the DOs and undertake a portion of computing tasks of the DOs to reduce their computing loads.

\subsection{FL Model}\label{subsec2}
Each DO $n$, $\forall n\in \mathcal{N}$, stores a local dataset $\mathcal{D}_n$ of size $|\mathcal{D}_n|$. The data size of the $N$ DOs is give by $\mathcal{D}= { {\textstyle\sum_{n=1}^{N}}}\mathcal{D}_n$.
The loss function of any DO $n$ is $f_i(\mathbf{w}^t)=\ell (\mathbf{x}_i^n,y_i^n;\mathbf{w}^t)$, where $\left \{\mathbf{x}_i^n,y_i^n\right \}$ represents the $i^{th}$ data sample of DO $n$. The loss function of DO $n$ with the model parameter $\mathbf{w}$ is given by \cite{9233403}

\begin{equation}
    F_n(\mathbf{w}^t ):=\frac{1}{D_n}\sum_{i\in\mathcal{D}_n }^{} f_i (\mathbf{w}^t).
\end{equation}
Then, we can solve the optimal model by minimize the following global loss function 

 \begin{equation}\label{loss1}
\min_{\mathbf{w}^t} F(\mathbf{w}^t ):=\sum_{n=1}^{N} \frac{D_n}{D}  F(\mathbf{w}^t ).
\end{equation}
When the loss function $f_i (\mathbf{w}^t)$ is convex, we can use iterative methods to find the optimal solution. Considering the inherent complexity of machine learning models, gradient descent techniques are usually used to solve the problem \eqref{loss1}.

\subsection{Time Consumption Model}
Before engaging in an FL task, the DOs may process some local computing tasks, which is represented by $\varphi_n=\left (  h_n,c_n\right )$. $h_n$ denotes the amount of data for the DO local task, and  $c_n$ denotes the number of CPU cycles required for DO $n$ to train per data sample. So the amount of local task is adjust to $C_n^\text{L}=c_n h_n$. Let $f_n^\text{L}$ (cycles/s) denote the frequency of DO $n$ to deal with its local tasks. We use $T_n^\text{Ex}$ to represent the time of the DOs to complete one local training session. Let $f^\text{max}_0$ denote the maximum CPU frequency of a DO. The following constraint should be satisfied 
\begin{equation}
    0 \leq f_n^\text{L}\leq f^\text{max}_0,\ \forall n\in \mathcal{N}. 
\end{equation}

After participating the FL task issued by the MO, each DO $n$ should raise the CPU frequency by $f_n^\text{F}$ cycles/s in order to complete its local tasks and the FL task within $T_n^\text{Ex}$. 
Let $I_n$ denote the required number of local iterations for DO $n$. The CPU cycles required for DO $n$ to complete a session of local training is given by $C_n^\text{F}=c_nI_nD_n$. Therefore, DO $n$ can estimate its local computation time in the $t^{th}$ training session as

\begin{equation}
    T_n^\text{Ex}=\frac{c_nI_nD_n}{f_n^\text{F}},\ \forall n\in \mathcal{N}.
\end{equation}

In order to reduce the overall training time, the MO is willing to devote part of its computing resources and allows the DOs to offload part of their local tasks to the MO for remote execution. Let $C_n^\text{off}$ denote the computing load of the tasks offloaded by DO $n$. Let $C^\text{max}$ denote the maximum computing power DO $n$.
Each DO $n$ can offload its local tasks to the MO only when $f_n^\text{L}+f_n^\text{F}>f_0^\text{max}$ and $f_n^\text{F}<f_0^\text{max}$. Then, the amount of computing load offloaded by DO $n$ to the MO is given by

\begin{equation}
    C_n^\text{off} = C_n^\text{L}+C_n^\text{F}-C^\text{max}. \label{coff}
\end{equation}

After each round of local training session, all the DOs should upload the trained local models to the MO. Let $g_n$ denote the channel gain from DO $n$ to the MO. The transmission rate from DO $n$ to the MO is given by \cite{9891799}

 \begin{equation}
    r_n = b_n\log_2(1+\frac{p_n g_n}{N_0b_n} ),\ \forall n\in \mathcal{N}, 
\end{equation}
where $b_n$ is the bandwidth allocated to DO $n$, $p_n$ is the transmission power of DO $n$, and $N_0$ is the spectral density of the Gaussian noise.

It is noted that the data to be transmitted by Do $n$ to the MO is divided into two parts: the parameters of trained local model $\textbf{w}_n$ and the input data of the offloaded tasks. Let $d_n$ denote the size of the input data of the tasks offloaded by Do $n$. Then we have 
 \begin{equation}
   d_n=\frac{C_n^\text{off}}{c_n},\ \forall n\in \mathcal{N}. \label{dn}
\end{equation}

Let $H_n=d_n + |\textbf{w}_n|$ denote the size of the data to be transmitted by Do $n$ to the MO. To ensure correct reception, the condition $r_n T_n^\text{Tx} \ge H_n$ should be satisfied. The lower bound of $T_n^\text{Tx}$ is obtained as

\begin{equation}
    T_n^\text{Tx}\ge\frac{H_n}{r_n} =\frac{d_n+|\textbf{w}_n|} {b_n\log_2(1+\frac{p_n g_n}{N_0b_n} )},\ \forall n\in \mathcal{N}. 
\end{equation}
Let $T_n$ denote the overall time for DO $n$ to complete one local training session. We can express $T_n$ as

\begin{equation}
    T_n= T_n^\text{Ex}+T_n^\text{Tx} = \frac{c_nD_nI_n }{f_n^F} +\frac{H_n}{b_n\log_2(1+\frac{p_n g_n}{N_0b_n} )},\ \forall n\in \mathcal{N}.
\end{equation}

\section{Problem Formulation}\label{sec3}
\begin{figure}[H]
  \centering
  \includegraphics[scale=.37]{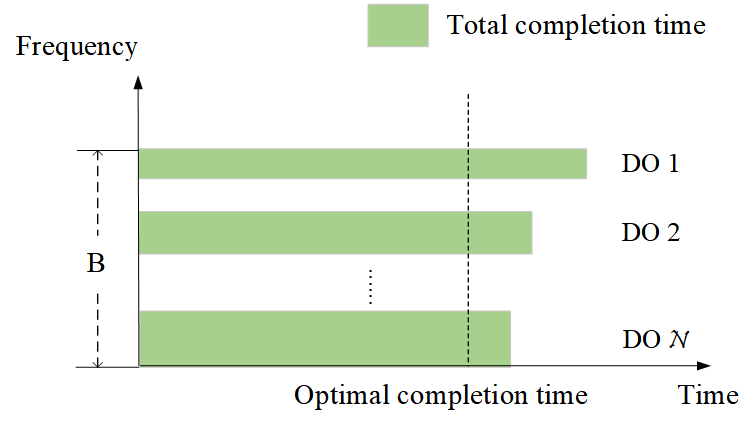} 
  \vspace{-2 mm}
  \caption{Communication and computation resource allocation for the DOs.} \label{completion time}
\end{figure}

As illustrated in Fig. \ref{completion time}, in order to minimize the overall training time, we must minimize the time $T_n$ for completing each round of global training. Let $a_n\in \left \{ 1,0 \right \} $ denote whether or not DO $n$ is selected, $p_n^\text{max}$  the maximum transmit power of DO $n$, and $T^{max}$ the time threshold for completing one round of training session. The time minimization problem can be mathematically expressed as
\begin{align}
    \min _{\mathbf{F,B,P,A} }\max_{} \; & \; T_n, \label{min} \\ 
    \mbox{s.t.}\;\;\;\;\; &  \sum_{n=1}^{N}a_n(f_n^\text{L}+f_n^\text{F}-f_0^\text{max})\le F\tag{\ref{min}.1},\ \forall n\in \mathcal{N}, \label{fn}\\
    & \sum_{n=1}^{N}a_nb_n\le B,\ \forall n\in \mathcal{N},\tag{\ref{min}.2}  \label{bn}   \\
    & 0 < T_n \le T^\text{max},\ \forall n\in \mathcal{N}, \tag{\ref{min}.3}  \label{T} \\
    &  0<f_n^\text{F}<f_0^\text{max},\ \forall n\in \mathcal{N},\tag{\ref{min}.4}\label{fmax}\\
    &0<p_n \le p_n^\text{max},\ \forall n\in \mathcal{N},\tag{\ref{min}.5}\label{pmax}\\
    &a_n\in\left \{ 0,1 \right \},\ \forall n\in \mathcal{N}, \tag{\ref{min}.6}  \label{an}
\end{align}
where $\mathbf{F} =\begin{bmatrix} f_1,...,f_n \end{bmatrix}^T$, $\mathbf{B} =\begin{bmatrix}
  b_1,...,b_n \end{bmatrix}^T$, $\mathbf{P} =\begin{bmatrix} p_1,...,p_n \end{bmatrix}^T$,
$\mathbf{A}=\begin{bmatrix} a_1,...,a_n \end{bmatrix}^T$.
Constraint \eqref{fn} states that the amount of computing tasks offloaded by all DOs cannot exceed the maximum computing capacity that the MO can process.
Constraint \eqref{bn} states that the total bandwidth allocated for transmitting data from the DOs to the MO cannot exceed the current available bandwidth $B$ of the system.

\section{Optimal Resource Allocation for Minimizing Training Time}
Problem \eqref{min} falls into the category of mixed integer nonlinear programming (MINLP), making a direct attainment of an optimal solution challenging. Nevertheless, the problem can be decomposed into two distinct subproblems, namely the client selection problem and the resource allocation problem, by segregating the integer variables from the continuous variables.

\subsection{Client Selection Algorithm}
In the resource exchange based FL system, any DO $n$ is willing to participate in the FL task initiated by the MO only when it takes on the new FL task $C_n^\text{F}$ and requires more computing power than it currently has $C^\text{max}$, i.e., $ C_n^\text{off} = C_n^\text{L}+C_n^\text{F}-C^\text{max}>0$. Then we give the client selection algorithm as follows.
\begin{algorithm}
    \renewcommand{\algorithmicrequire}{\textbf{Input:}}
    \renewcommand{\algorithmicensure}{\textbf{Output:}}
    \caption{DO Selection Algorithm}
    \label{DOclient}
    \begin{small}
  \begin{algorithmic}[1]
    
    \Require  $C_n^\text{L}, C_n^\text{F}, C^\text{max}$.
\Ensure Client selection indicator $a_n$, the data size of the offloaded local tasks $d_n$.
		\Repeat \\ 
		\state Obtain $C_n^\text{off}$ according to eq. \eqref{coff}.
		 \If{ $C_n^\text{off}>0$, }
		    \State Set $a_n=1$, and obtain $d_n$ using eq. \eqref{dn}.
		\EndIf
		\Until Each client has been traversed once.
  \end{algorithmic}
  \end{small}
\end{algorithm}

\subsection{Resource Allocation Algorithm}
After performing the client selection algorithm, i.e., \textbf{Algorithm 1}, the objective function of problem \eqref{min} can be transformed into a continuous non-convex problem as follows:
\begin{align}
    \min _{\mathbf{F,B,P} }\max_{} \; & \; T_n, \label{min1} \\ 
    \mbox{s.t.}\;\;\;\;\; &  \sum_{n=1}^{N}(f_n^\text{L}+f_n^\text{F}-f_0^\text{max})\le F\tag{\ref{min1}.1},\ \forall n\in \mathcal{N},  \label{fn1}\\
    & \sum_{n=1}^{N}b_n\le B,\ \forall n\in \mathcal{N}, \tag{\ref{min1}.2}  \label{bn1}   \\
    & 0 < T_n \le T^\text{max},\ \forall n\in \mathcal{N}, \tag{\ref{min1}.3}  \label{T1} \\
    &  0<f_n^\text{F}<f_0^\text{max},\ \forall n\in \mathcal{N},\tag{\ref{min1}.4}\label{fmax1}\\
    &0<p_n \le p_n^\text{max},\ \forall n\in \mathcal{N}.\tag{\ref{min1}.5}\label{pmax1}
\end{align}

\begin{lemma}
    When $T_1 = ... = T_n=\bar{T}$, problem \eqref{min1} reaches its optimal solution. \label{lemma1}
\end{lemma}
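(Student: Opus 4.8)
The plan is to prove the lemma by a resource-exchange (contradiction) argument resting on the strict monotonicity of each $T_n$ in its own allocated resources. First I would record the monotonicity facts I will use repeatedly: the execution term $T_n^{\text{Ex}} = c_n D_n I_n / f_n^{\text{F}}$ is continuous and strictly decreasing in $f_n^{\text{F}}$, and the transmission term $T_n^{\text{Tx}} = H_n / r_n$ is continuous and strictly decreasing in both $b_n$ and $p_n$. The latter follows because $r_n = b_n \log_2(1 + p_n g_n / (N_0 b_n))$ is strictly increasing in $b_n$ and in $p_n$; for the bandwidth this is the standard fact that, writing $u = p_n g_n/(N_0 b_n)$, one has $\partial r_n/\partial b_n \propto \ln(1+u) - u/(1+u) > 0$ for all $u > 0$. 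Hence each $T_n$ strictly decreases whenever any of its own resources $f_n^{\text{F}}, b_n, p_n$ is increased.

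Then I would argue by contradiction. Suppose an optimal allocation $(\mathbf{F}^*,\mathbf{B}^*,\mathbf{P}^*)$ attains objective value $\bar T = \max_n T_n$ but the times are not all equal. Partition the DOs into the bottleneck set $\mathcal{M} = \{n : T_n = \bar T\}$ and its complement $\mathcal{M}^c = \{n : T_n < \bar T\}$, where $\mathcal{M}^c \neq \emptyset$ by assumption. Picking any $k \in \mathcal{M}^c$, since $T_k < \bar T$ and $T_k$ is continuous in $b_k$, I can reduce $b_k$ by a small amount $\delta > 0$ so that $T_k$ rises but still stays strictly below $\bar T$. This frees a positive quantity $\delta$ of bandwidth in the pool constraint \eqref{bn1} without disturbing the feasibility of the remaining variables.

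Next I would redistribute this freed bandwidth among the bottleneck DOs, assigning each $j \in \mathcal{M}$ an increment $\delta/|\mathcal{M}| > 0$. By the strict monotonicity above, every $T_j$ with $j \in \mathcal{M}$ strictly decreases, while the times of the DOs in $\mathcal{M}^c$ remain below $\bar T$ (shrinking $\delta$ further if needed to keep $T_k < \bar T$). The new maximum completion time is therefore strictly smaller than $\bar T$, contradicting the optimality of $(\mathbf{F}^*,\mathbf{B}^*,\mathbf{P}^*)$. Thus at the optimum all $T_n$ must coincide, and denoting the common value by $\bar T$ gives $T_1 = \cdots = T_N = \bar T$. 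The identical exchange can instead be carried out using $f_n^{\text{F}}$ and the CPU-pool constraint \eqref{fn1}, so the conclusion does not depend on which pooled resource is transferred.

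The step I expect to be the main obstacle is guaranteeing that the redistribution remains feasible in the presence of the box constraints \eqref{fmax1}, \eqref{pmax1} and the cap $T_n \le T^{\text{max}}$: a bottleneck DO could be saturated, for instance with $f_j^{\text{F}}$ already at $f_0^{\text{max}}$, so that its time cannot be lowered by additional CPU. I would address this by performing the transfer in whichever pooled resource still has headroom for the bottleneck DOs; since $T_j$ can be reduced through either $f_j^{\text{F}}$ or $b_j$, a bottleneck DO is genuinely stuck only if it is simultaneously saturated in both, which I would argue cannot hold at a finite-time optimum under the problem's feasibility assumptions. Alternatively, any residual slack in the resource pools themselves can be allocated directly to the bottleneck DOs, again strictly reducing $\bar T$ and closing the argument.
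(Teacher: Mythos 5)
Your proof is correct and takes essentially the same route as the paper's Appendix~A, which likewise argues that resources (bandwidth and CPU) can be shifted from DOs with slack to the slowest DO until no completion time can be reduced further, forcing all $T_n$ to equalize at the optimum. Your version is in fact more rigorous than the paper's sketch: you verify the strict monotonicity of $r_n$ in $b_n$ explicitly, and your worry about saturated box constraints is resolved even more simply than you suggest, since the bandwidth constraint \eqref{bn1} imposes no per-DO cap, so the $\delta$-transfer of bandwidth to the bottleneck set is always feasible.
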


\begin{proof}
    See Appendix A.
\end{proof}

According to Lemma \ref{lemma1}, we introduce an auxiliary variable $\bar{T}$ to simplify the objective function of problem \eqref{min1} as in \cite{shen2022joint}. Then problem \eqref{min1} is rewritten as

\begin{align}
    \min _{\mathbf{F,B,P},\bar{T}}\; & \; \bar{T}, \label{min2} \\ 
    \mbox{s.t.}\;\;\;\;\; &\eqref{fn1},\eqref{bn1},\eqref{T1},\eqref{fmax1},\eqref{pmax1}\notag\\
    & \bar{T}\ge \left \{ \frac{c_nD_nI_n }{f_n^F} +\frac{H_n}{b_n\log_2(1+\frac{p_n g_n}{N_0b_n} )} \right \},\ \forall n\in \mathcal{N}. \tag{\ref{min2}.1}\label{cons2}
\end{align}

Due to the presence of non-convex constraints, we propose an iterative algorithm to solve problem \eqref{min2}. The main method is as follows. Firstly, we fix $(\mathbf{B,P})$ and find the optimal solution to ($\mathbf{F}$). Then, we update $(\mathbf{B,P})$ based on the obtained ($\mathbf{F}$) in the previous step.

Given ($\mathbf{B,P}$), problem \eqref{min2} becomes:

\begin{align}
    \min _{\mathbf{F},\bar{T} }\; & \; \bar{T}, \label{Tf} \\ 
  \mbox{s.t.}\;\;\;\;\; &  \sum_{n=1}^{N}(f_n^\text{L}+f_n^\text{F}-f_0^\text{max})\le F,\forall n\in \mathcal{N},\tag{\ref{Tf}.1}  \label{Tfn}\\
    &  0<f_n^\text{F}<f_0^\text{max},\ \forall n\in \mathcal{N}.\tag{\ref{Tf}.2}\label{Tfmax}
\end{align}

It is noted that problem \eqref{Tf} is concave. So we can employ the Lagrange duality method \cite{8644186} to solve it. Let $\lambda \ge 0$ and $\mu \ge 0$ denote the Lagrange multiplier associated with the constraints in \eqref{Tfn} and \eqref{Tfmax}. Then the partial Lagrangian of problem \eqref{Tf} is given by
\begin{align}
     \mathcal{L}(\mathbf{F}, \lambda,\mu )=\frac{c_nD_nI_n }{f_n^F}+\sum_{n=1}^{N}\lambda  (f_n^\text{L}+f_n^\text{F}-f_0^\text{max}-F) +\mu (f_n^F-f_0^\text{max}).
\end{align}
Accordingly, the dual function is given by 
\begin{align}
    \mathcal{G} (\lambda,\mu)= \text{inf} \;\;\mathcal{L}(\mathbf{F, \lambda,\mu}).\label{dule} 
\end{align}
Then, the dual problem is 
\begin{align}
    \max_{\lambda,\mu}\mathcal{G} (\lambda,\mu),\label{dule1} \\ 
    \mbox{s.t.}\;\;\;\;\; &  f_n^\text{F}>0,\forall n\in \mathcal{N}, \tag{\ref{dule1}.1} \label{dule1f}  \\
    & \lambda \ge 0, \mu \ge 0. \tag{\ref{dule1}.2} \label{dule1ml} 
\end{align}

\begin{lemma}
    Problem \eqref{dule1} is convex under given $\lambda, \mu$, and the optimal solution $(f^\text{F})^\ast$ is given by \label{fast}
\end{lemma}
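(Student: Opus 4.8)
The plan is to compute the dual function in \eqref{dule} by solving the inner infimum over $\mathbf{F}$ for fixed multipliers $\lambda,\mu\ge 0$, and to show that this inner problem is convex so that its unique minimizer is characterized by the stationarity condition. Since the Lagrangian separates additively across the DOs, the minimization can be carried out independently for each $f_n^\text{F}$, after which the closed-form expression for $(f_n^\text{F})^\ast$ follows by elementary calculus.

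First I would establish convexity. For fixed $\lambda,\mu$, the only term of $\mathcal{L}(\mathbf{F},\lambda,\mu)$ that depends nonlinearly on $f_n^\text{F}$ is $\frac{c_nD_nI_n}{f_n^\text{F}}$; with $c_n,D_n,I_n>0$ and $f_n^\text{F}>0$ this is a positive constant times the reciprocal of a positive variable, hence strictly convex on $(0,\infty)$. The remaining terms $\lambda(f_n^\text{L}+f_n^\text{F}-f_0^\text{max}-F)$ and $\mu(f_n^\text{F}-f_0^\text{max})$ are affine in $f_n^\text{F}$, so $\mathcal{L}$ is strictly convex in each $f_n^\text{F}$ and the infimum defining $\mathcal{G}$ is attained at a unique point.

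Next I would impose the first-order optimality condition. Differentiating with respect to $f_n^\text{F}$ and setting the derivative to zero gives
\begin{equation}
    -\frac{c_nD_nI_n}{(f_n^\text{F})^2}+\lambda+\mu=0,
\end{equation}
and taking the positive root yields the closed-form minimizer
\begin{equation}
    (f_n^\text{F})^\ast=\sqrt{\frac{c_nD_nI_n}{\lambda+\mu}}.
\end{equation}
Strict convexity guarantees that this stationary point is the global minimizer; equivalently, the second derivative $\frac{2c_nD_nI_n}{(f_n^\text{F})^3}>0$ confirms it is a minimum. The positivity constraint \eqref{dule1f} is satisfied automatically since the objective tends to $+\infty$ as $f_n^\text{F}\to 0^+$, and the expression is well defined whenever $\lambda+\mu>0$.

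The main obstacle is not the differentiation itself but the bookkeeping around the box constraint $0<f_n^\text{F}<f_0^\text{max}$: the unconstrained stationary point above need not satisfy the upper bound, so strictly one must check that an active upper constraint is absorbed through $\mu$ in the outer dual maximization \eqref{dule1}. I would make this explicit, noting that $(f_n^\text{F})^\ast$ is to be read as the dual-optimal value for the given multipliers, with the upper bound enforced by the subsequent update of $\mu$ and the coupling constraint \eqref{Tfn} enforced by the update of $\lambda$.
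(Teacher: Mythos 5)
Your proof is correct and takes essentially the same route as the paper's Appendix B: differentiate the Lagrangian in $f_n^\text{F}$, use positivity of the second derivative $\frac{2c_nD_nI_n}{(f_n^\text{F})^3}$ to establish convexity, and solve the stationarity condition $-\frac{c_nD_nI_n}{(f_n^\text{F})^2}+\lambda+\mu=0$ for the positive root $(f_n^\text{F})^\ast=\sqrt{c_nD_nI_n/(\lambda+\mu)}$. Your treatment is in fact slightly more careful than the paper's --- you correctly call the Lagrangian convex where the paper writes ``concave,'' and you explicitly note that the box constraint $f_n^\text{F}<f_0^\text{max}$ is absorbed via the multiplier $\mu$ in the outer dual maximization, a point the paper leaves implicit.
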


\begin{equation}
    (f_n^\text{F})^\ast=\sqrt{\frac{C_nD_nI_n}{\lambda+\mu}}, \ \forall n\in \mathcal{N}. \label{optimalf}
\end{equation} 

\begin{proof}
    See Appendix B.
\end{proof}

After obtaining the optimal solution of $\mathbf{F}$, problem \eqref{min2} is simplified to
\begin{align}
    \min _{\mathbf{B,P},\bar{T}} \; & \;\bar{T},\label{Tcom} \\ 
    \mbox{s.t.}\;\;\;\;\; &  \sum_{n=1}^{N}b_n\le B,\ \forall n\in \mathcal{N},\tag{\ref{Tcom}.1}  \label{Tcombn}   \\
    &0<p_n \le p_n^\text{max},\ \forall n\in \mathcal{N}.\tag{\ref{Tcom}.2}\label{Tcompmax}
\end{align}

\begin{lemma}
    The objective of problem \eqref{Tcom} monotonically decreases with respect to $p_n$ and $b_n$, and $p_n$ also monotonically decreases with respect to $b_n$. \label{pnbn}
\end{lemma}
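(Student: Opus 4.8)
The plan is to observe that, once $\mathbf{F}$ has been fixed by Lemma \ref{fast}, the execution term $T_n^\text{Ex} = c_n D_n I_n / f_n^\text{F}$ is a constant inside each $T_n$, so all the dependence of the objective on $(\mathbf{B},\mathbf{P})$ is carried by the transmission term $T_n^\text{Tx} = H_n / r_n$ with $r_n = b_n \log_2(1 + p_n g_n/(N_0 b_n))$. Because Lemma \ref{lemma1} forces $T_1 = \cdots = T_n = \bar{T}$ at the optimum, the objective $\bar{T}$ equals $\max_n T_n$, and it suffices to show that each individual $T_n$ is decreasing in its own $p_n$ and $b_n$; the monotonicity of the pointwise maximum then follows immediately.

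For monotonicity in $p_n$ I would differentiate the rate directly: $\partial r_n / \partial p_n = \frac{g_n}{N_0 \ln 2}\cdot \frac{1}{1 + p_n g_n/(N_0 b_n)} > 0$. Since $T_n^\text{Tx} = H_n/r_n$ with $H_n > 0$, an increasing $r_n$ gives a decreasing $T_n^\text{Tx}$, hence a decreasing $T_n$. For monotonicity in $b_n$ the computation is slightly less immediate: writing $u = p_n g_n/(N_0 b_n)$ one finds $\partial r_n/\partial b_n = \frac{1}{\ln 2}\bigl[\ln(1+u) - \frac{u}{1+u}\bigr]$, and the claim reduces to the elementary inequality $\ln(1+u) > u/(1+u)$ for $u > 0$, which I would verify by noting that both sides vanish at $u = 0$ and comparing their derivatives. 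This again yields $\partial r_n/\partial b_n > 0$ and therefore a decreasing $T_n^\text{Tx}$.

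The last assertion, that $p_n$ decreases with $b_n$, is the part I expect to require the most care, because it is a statement about the implicit tradeoff at the optimum rather than a raw partial derivative. Here I would invoke Lemma \ref{lemma1} to argue that constraint \eqref{cons2} is tight, so $T_n^\text{Tx}$ is held at a fixed value $\tau = \bar{T} - T_n^\text{Ex}$; solving $b_n \log_2(1 + p_n g_n/(N_0 b_n)) = H_n/\tau$ for $p_n$ gives the closed form $p_n(b_n) = \frac{N_0 b_n}{g_n}\bigl(2^{H_n/(\tau b_n)} - 1\bigr)$. Differentiating and substituting $v = H_n/(\tau b_n) > 0$ reduces the sign of $dp_n/db_n$ to that of $h(v) = 2^v(1 - v\ln 2) - 1$; since $h(0) = 0$ and $h'(v) = -v(\ln 2)^2 2^v < 0$ for $v > 0$, we obtain $h(v) < 0$ and hence $dp_n/db_n < 0$.

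The main obstacle is thus the third claim: one must first justify that the time constraint is active, so that $p_n$ can be written as an explicit function of $b_n$, and then dispatch the resulting exponential inequality $2^v(1 - v\ln 2) < 1$. Both auxiliary inequalities, the logarithmic one in the second part and the exponential one in the third, are standard, but identifying the correct substitutions and confirming the endpoint and derivative signs is where the real work lies.
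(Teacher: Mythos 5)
Your proposal is correct and follows essentially the same route as the paper: monotonicity of the transmission time in $p_n$ and $b_n$ via calculus on the rate (the paper's auxiliary function $z$ encodes the same elementary inequality $\ln(1+u) > u/(1+u)$ that you use), followed by expressing $p_n$ as a function of $b_n$ through the tight rate constraint, which is exactly the paper's relation \eqref{function of pb}. Your direct sign argument via $h(v) = 2^{v}(1 - v\ln 2) - 1$, with $h(0)=0$ and $h'(v)<0$, is equivalent to (and slightly cleaner than) the paper's convexity-plus-limit argument for $\frac{\partial p_n}{\partial b_n}<0$, whose stated limit $\lim_{b_n \to 0^{+}} \frac{\partial p_n}{\partial b_n}=0$ should in fact be taken as $b_n \to \infty$; you also explicitly prove the $p_n$-monotonicity that the paper merely asserts.
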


\begin{proof}
    See Appendix C.
\end{proof}

According to \textbf{Lemma 3} and \eqref{function of pb}, we can transform problem \eqref{Tcom} into the following form
\begin{align}
    \max _{b_n } \; & \; b_n,\ \forall n\in \mathcal{N}, \label{maxb to minimize t} \\ 
    \mbox{s.t.}\;\;\;\;\; &  \sum_{n=1}^{N}b_n\le B,\forall n\in \mathcal{N},\tag{\ref{maxb to minimize t}.1}  \label{bn4}   \\
    &0<\frac{N_0b_n}{g_n}\left ( 2^\frac{H_n}{T_n^\text{Tx}b_n} -1 \right )\le p_n^\text{max},\forall n\in \mathcal{N},\tag{\ref{maxb to minimize t}.2}\label{pmax4}
\end{align}
where constraint \eqref{pmax4} is specified in \eqref{function of pb}. Considering that constraint \eqref{pmax4} permits the separated optimization of $p_n$ and $b_n$ for each DO, we give the upper bound of $b_n$ as the following form as in \cite{9723627}. 
\begin{equation}
    b_n^\text{max}=\text{arg} \min \left \{ b_n>0 | p_n(b_n)>0 \right \}  , \forall n\in \mathcal{N},\label{argbn}
\end{equation}
where $b_n^\text{max}$ is the maximum of $b_n$ satisfying the constraints \eqref{bn4} and \eqref{pmax4}. The optimization problem \eqref{argbn} is then transformed into
\begin{equation}
    b^{\ast}= \min_{ n\in \mathcal{N}} b_n^\text{max}.\label{transb}
\end{equation}
Problem \eqref{transb} can be solved by the following lemma.

\begin{lemma}
       The optimal solution $b^\ast$ of problem \eqref{transb} is
        \label{bp}
        \begin{align}
            b_n^\ast=
            \begin{cases}
               b_n^\text{min},& p_n=p_n^\text{max},\\
               b_n(\rho ),& 0<p_n<p_n^\text{max}.\label{the optimal of b}
            \end{cases}
        \end{align} 
     where $\rho$ is the Lagrange multiplier, and  $b_n(\rho )$ is the solution to
     \begin{equation}
     \frac{N_0}{g_n} \left ( \textit {e}^{\frac{(\ln_{}{2})H_n}{T_n^\text{Tx}b_n(\rho )}  } -1-\frac{(\ln_{}{2})H_n }{T_n^\text{Tx}b_n(\rho )}\textit {e}^{\frac{(\ln_{}{2})H_n}{T_n^\text{Tx}b_n(\rho )}}   \right )+\rho =0.\label{solution rho} 
   \end{equation}

\end{lemma}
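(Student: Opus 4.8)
The plan is to regard problem \eqref{transb}, equivalently the per--DO problem \eqref{maxb to minimize t}, as a convex program and to read off its optimum from the Karush--Kuhn--Tucker (KKT) conditions. First I would use the power--bandwidth relation \eqref{function of pb} to eliminate the transmit power, writing $p_n(b_n)=\frac{N_0 b_n}{g_n}\bigl(2^{H_n/(T_n^\text{Tx} b_n)}-1\bigr)$. The structural fact I would establish before anything else is that $p_n(b_n)$ is convex and strictly decreasing for $b_n>0$: viewing it as a constant multiple of the perspective $b_n\,g\!\left(H_n/(T_n^\text{Tx}b_n)\right)$ of the convex function $g(x)=2^{x}-1$ yields convexity, whereas Lemma~\ref{pnbn} already supplies the monotonicity. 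Hence the region cut out by \eqref{bn4}--\eqref{pmax4} is convex and the KKT system is both necessary and sufficient for the global optimum.

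Next I would attach a multiplier $\rho\ge 0$ to the bandwidth budget \eqref{bn4} and treat the per--DO power cap \eqref{pmax4} through complementary slackness. Differentiating the resulting stationarity condition $\partial p_n/\partial b_n+\rho=0$ gives
\[
\frac{\partial p_n}{\partial b_n}=\frac{N_0}{g_n}\left(e^{\frac{(\ln 2)H_n}{T_n^\text{Tx} b_n}}-1-\frac{(\ln 2)H_n}{T_n^\text{Tx} b_n}\,e^{\frac{(\ln 2)H_n}{T_n^\text{Tx} b_n}}\right),
\]
which is precisely \eqref{solution rho} after inserting $\rho$; this identifies the interior stationary point $b_n(\rho)$.

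The two branches of \eqref{the optimal of b} then follow from complementary slackness on \eqref{pmax4}. When the cap is active, $p_n=p_n^\text{max}$, and since $p_n(\cdot)$ is strictly monotone the equation $p_n(b_n)=p_n^\text{max}$ has the unique root $b_n^\text{min}$, giving the first branch. When the cap is slack, $0<p_n<p_n^\text{max}$, its multiplier vanishes and stationarity leaves $b_n=b_n(\rho)$ of \eqref{solution rho}, the second branch. To finish I would pin down $\rho$ from the budget: strict convexity of $p_n$ makes $\partial p_n/\partial b_n$ strictly increasing in $b_n$, so $b_n(\rho)$, and hence $\sum_n b_n$, is monotone in $\rho$; imposing the active constraint $\sum_n b_n=B$ then fixes a unique $\rho$ (found, e.g., by bisection), and substitution into \eqref{transb} returns $b^\ast$.

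I expect the main obstacle to be the convexity/sufficiency step rather than the differentiation. One must argue carefully that, after the substitution $p_n=p_n(b_n)$, the objective together with \eqref{pmax4} forms a genuinely convex feasible problem, so that a KKT point is the global minimizer, and that for each DO the capped branch and the interior branch are mutually exclusive (they coincide only at $b_n^\text{min}$). Granted the convexity and the monotonicity already provided by Lemma~\ref{pnbn}, the explicit differentiation leading to \eqref{solution rho} and the monotone, one--dimensional search for $\rho$ are then routine.
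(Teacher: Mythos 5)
Your proposal is correct and follows essentially the same route as the paper's Appendix D: treat the power--bandwidth tradeoff $p_n(b_n)$ as a convex, decreasing function, form the Lagrangian of the bandwidth budget with multiplier $\rho$, obtain \eqref{solution rho} from stationarity, take the $b_n^{\text{min}}$ branch when the power cap $p_n = p_n^{\text{max}}$ is active, and fix $\rho$ by the budget equation $\sum_n \max\{b_n(\rho), b_n^{\text{min}}\} = B$. The only minor differences are cosmetic: you justify convexity via a perspective-function argument instead of the second-derivative computation the paper carries over from Appendix C, and you invoke uniqueness of the root of $p_n(b_n)=p_n^{\text{max}}$ by monotonicity rather than writing out the paper's closed-form Lambert-$W$ expression \eqref{solution bnmin} for $b_n^{\text{min}}$, neither of which affects the argument.
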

\begin{proof}
See Appendix D.
\end{proof}

Finally, we conclude the solution method to problem \eqref{min2} in the following \textbf{Algorithm 2}.

\begin{algorithm}
\caption{The algorithm to solve problem \eqref{min2}.}
    \label{iterative}
    \begin{small}
  \begin{algorithmic}[1]   
    \State Initialization: Set initial values for $(\textbf{F}^0, \textbf{B}^0,\textbf{P}^0)$. 
    \State The solution is obtained by using eq.\eqref{optimalf} and eq.\eqref{the optimal of b}. 
    \State Set the iteration number $l=0$.
		\Repeat 
		\State With the known $(\textbf{B}^l, \textbf{P}^l)$, obtain the optimal $\textbf{F}^{l+1}$ by \eqref{optimalf}. 
		\State Substitute $(\textbf{F}^{l+1}, \textbf{B}^{l}, \textbf{P}^l)$ into eq. \eqref{min2} and obtain the optimal value for $\bar{T}^l$.
        \State With the known $(\textbf{F}^{l+1})$, obtain $(\textbf{B}^{l+1},\textbf{P}^{l+1})$ by \eqref{the optimal of b}.
        \State Let $l=l+1$.
		\Until constrint \eqref{cons2} converges or the maximum number of iterations $l^\text{max}$ is reached.
  \end{algorithmic}
  \end{small}
\end{algorithm}

\begin{lemma}
       Algorithm 2 converges to a stable solution. \label{converge}
\end{lemma}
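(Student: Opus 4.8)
The plan is to recognize Algorithm 2 as an alternating (block coordinate) minimization applied to problem \eqref{min2}, and to prove convergence by showing that the generated objective sequence is monotonically non-increasing and bounded below, so that the monotone convergence theorem applies. Denote by $\bar{T}^{\,l}$ the value of the objective of \eqref{min2} evaluated at the iterate $(\mathbf{F}^{l},\mathbf{B}^{l},\mathbf{P}^{l})$, and observe that the variables split into two blocks: the computation block $\mathbf{F}$, which influences only the execution term $c_nD_nI_n/f_n^{F}$ of each $T_n$, and the communication block $(\mathbf{B},\mathbf{P})$, which influences only the transmission term of each $T_n$.

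First I would establish the descent property of the $\mathbf{F}$-update. Holding $(\mathbf{B}^{l},\mathbf{P}^{l})$ fixed, Step 5 computes $\mathbf{F}^{l+1}$ through \eqref{optimalf}, which by Lemma \ref{fast} is the exact optimal solution of subproblem \eqref{Tf}. Since $(\mathbf{F}^{l},\mathbf{B}^{l},\mathbf{P}^{l})$ is itself feasible for \eqref{Tf}, optimality of $\mathbf{F}^{l+1}$ yields $\bar{T}(\mathbf{F}^{l+1},\mathbf{B}^{l},\mathbf{P}^{l})\le \bar{T}^{\,l}$. Next I would establish the descent property of the communication update: holding $\mathbf{F}^{l+1}$ fixed, Step 7 computes $(\mathbf{B}^{l+1},\mathbf{P}^{l+1})$ through \eqref{the optimal of b}, which by Lemma \ref{pnbn} and Lemma \ref{bp} is the exact optimal solution of subproblem \eqref{Tcom}. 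Hence $\bar{T}^{\,l+1}=\bar{T}(\mathbf{F}^{l+1},\mathbf{B}^{l+1},\mathbf{P}^{l+1})\le \bar{T}(\mathbf{F}^{l+1},\mathbf{B}^{l},\mathbf{P}^{l})$. Chaining the two inequalities gives $\bar{T}^{\,l+1}\le \bar{T}^{\,l}$ for every $l$, so the objective sequence is monotonically non-increasing.

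Then I would invoke boundedness from below: constraint \eqref{T1} forces $T_n>0$, and since $\bar{T}\ge T_n$ by \eqref{cons2}, we have $\bar{T}^{\,l}>0$ for all $l$. A monotonically non-increasing sequence that is bounded below converges, so $\bar{T}^{\,l}\to \bar{T}^{\ast}$ for some limit $\bar{T}^{\ast}\ge 0$, which together with the stopping rule in Step 9 certifies that Algorithm 2 terminates at a stable solution.

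The main obstacle I anticipate is not the monotone-convergence argument itself but the bookkeeping that makes each block update a legitimate exact minimization: I must verify that the closed-form minimizers supplied by Lemmas \ref{fast} and \ref{bp} remain feasible with respect to \eqref{fmax1} and \eqref{pmax1} after every update, so that the feasible set stays nonempty and no inequality in the descent chain is violated. A secondary subtlety is that convergence of the objective value does not by itself force the iterates $(\mathbf{F}^{l},\mathbf{B}^{l},\mathbf{P}^{l})$ to converge; to speak of a ``stable solution'' I would additionally note that each subproblem has a unique minimizer (the $\mathbf{F}$-subproblem is strictly convex and the $b_n$-subproblem returns a single value per DO), which pins down the limiting iterate and not merely the limiting objective.
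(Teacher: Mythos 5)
Your proof is correct and takes essentially the same route as the paper: both establish the descent chain $\bar{T}^{\,l} \ge \bar{T}(\mathbf{F}^{l+1},\mathbf{B}^{l},\mathbf{P}^{l}) \ge \bar{T}^{\,l+1}$ from the exact block minimizers in \eqref{optimalf} and \eqref{the optimal of b}, and then conclude by monotonicity plus boundedness below of the objective sequence. Your additional remarks --- verifying feasibility of the closed-form updates and noting that objective convergence alone does not pin down the iterates without uniqueness of the subproblem minimizers --- are refinements the paper's proof omits, but they do not alter the underlying argument.
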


\begin{proof}
At the $l$-th iteration, we know that
    \begin{align}
\bar{T}^l=& \bar{T}\;(\mathbf{F}^l,\mathbf{B}^l,\mathbf{P}^l)\\ \notag
\ge& \bar{T}\; (\mathbf{F}^{l+1},\mathbf{B}^l,\mathbf{P}^l)\\ \notag
\ge& \bar{T}\; (\mathbf{F}^{l+1},\mathbf{B}^{l+1},\mathbf{P}^{l})\\ \notag
\ge& \bar{T}\; (\mathbf{F}^{l+1},\mathbf{B}^{l+1},\mathbf{P}^{l+1})\notag.
\end{align}
The first inequality is derived from eq. \eqref{optimalf}. The validity of the second and third inequalities is ensured due to the joint bandwidth and transmit power allocation. Finally, with the optimal value obtained, solving for $\bar{T}$ is performed. As the objective value $\bar{T}$ remains non-negative throughout the optimization, the convergence of the algorithm is ensured.
\end{proof}

The computational complexity of the algorithm analyzed as follows. The complexity of \textbf{Algorithm \ref{DOclient}} is $\mathcal{O}(N)$ since the client selection is linear with respect to the number of DOs $N$. The complexity of to sovle problem \eqref{Tf} is $\mathcal{O}(N_{se})$ according to eq. \eqref{optimalf}, where $N_{se}$ is the number of the selected DOs. The algorithm to solve problem $\eqref{Tcom}$ is with the complexity of $\mathcal{O}(N_{se}\log_{2}({1/\epsilon _1})({1/\epsilon _2}))$, where $\epsilon _1$ and $\epsilon _2$ are respectively the accuracy of solving eq. \eqref{the optimal of b}. As a result, the computational complexity of the overall Algorithm is $\mathcal{O}(N+l^{max}\log_{2}({1/\epsilon _1})({1/\epsilon _2})N_{se})$.

\section{Simulation Results}\label{sec4}
The experiments are carried on the TensorFlow FL framework. The dataset used is the MNIST dataset. Before each global model update, an independent identical distributed (IID) dataset is extracted from the MNIST and allocated to each DO. At the end of each global update, the current dataset of the DO is cleared, and a new one is generated.

We consider a square area of $500 \times 500 m^2$ with $N = 50$ DOs uniformly distributed within it. The path loss is modeled as $-128.1 + 37.6 \log_{}{}  (d)$  (where $d$ is in km). The standard deviation of shadow attenuation is 8 dB, and the noise power spectral density is $N_0$ = -174 dBm/Hz. The total bandwidth of the FL system is $B=2$MHz, and the computing power of the CPU of MO is $F = 2$ GHz. The size of the tranined model is $|\textbf{w}_n|= 24.6$ kb. We set $c_n=20$, and assume that all the DOs transmit the data with the maximum average transmit power $p^\text{max}= 5 $ dBm.
 
\vspace{-4 mm}
\begin{figure}[H]
  \centering
  \includegraphics[scale=.42]{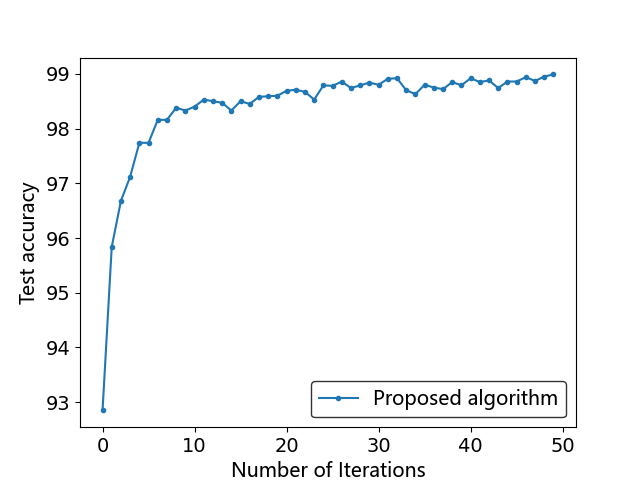} 
  \vspace{-2 mm}
  \caption{The convergence of test accuracy.} \label{accuracy}
\end{figure}

\vspace{-3 mm}
\vspace{-4 mm}
\begin{figure}[H]
  \centering
  \includegraphics[scale=.42]{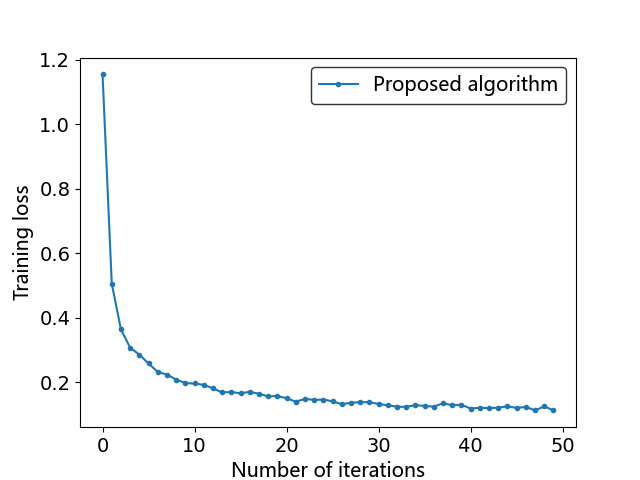} 
  \vspace{-2 mm}
  \caption{The convergence of training loss.} \label{loss}
\end{figure}
\vspace{-3 mm}

In Figs. \ref{accuracy} and \ref{loss}, we show the convergence of test accuracy and training loss of the trained model with the number of global iterations, respectively. After 50 iterations, the accuracy soars to over 99\% and the loss value plunges to less than 1\%.
\vspace{-4 mm}
\begin{figure}[H]
  \centering
  \includegraphics[scale=.42]{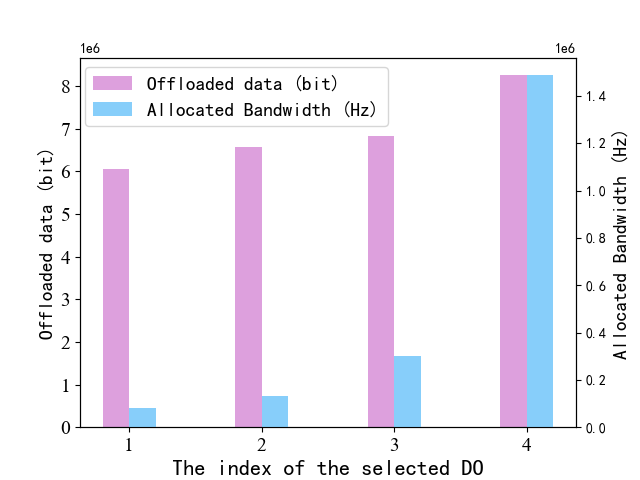} 
  \vspace{-2 mm}
  \caption{Relationship between offloaded data bits and allocated bandwidth.}\label{offloading and bandwidth}
\end{figure}
\vspace{-3 mm}
In Fig. \ref{offloading and bandwidth}, we show the data volume offloaded by the DOs and the bandwidth allocated to them. Specially, four DOs in a training session are selected for illustration. From Fig. \ref{offloading and bandwidth}, one can see that the bandwidth allocated to a DO is directly proportional to the data volume it offloading. The reason is that as the volume of offloaded data increases, the uploading requires more time, leading to a higher bandwidth allocation. This allocation aims to balance the disparities between the different transmission time of the DOs.

\vspace{-4 mm}
\begin{figure}[H]
  \centering
  \includegraphics[scale=.27]{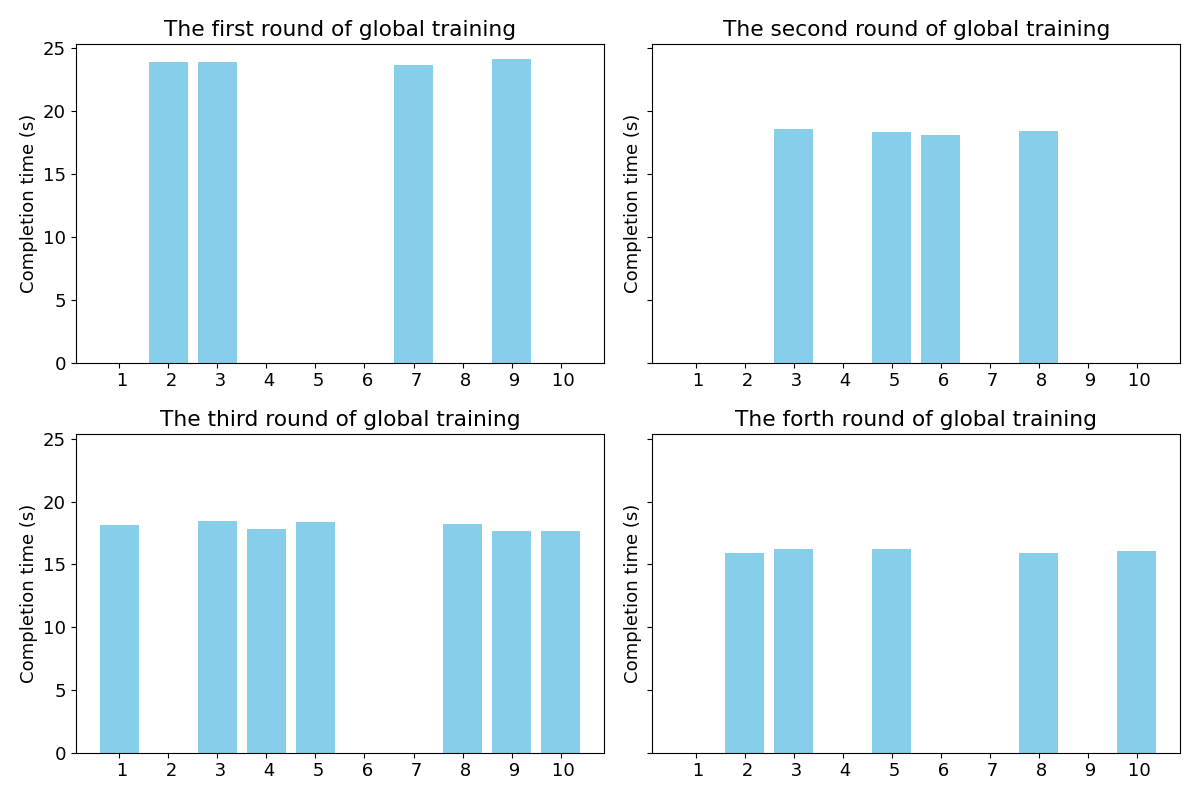} 
  \vspace{-2 mm}
  \caption{The client selection result in each round of global training.}\label{select}
\end{figure}
\vspace{-3 mm}
In Fig. \ref{select}, we show the result of client selection and the overall training time across various rounds. In the experiment, after each round of local training session, a DO may generate or finish local computing tasks, prompting a fresh round of client selection. From Fig. \ref{select}, it's noticeable that the local training time of the selected DOs keeps across the rounds. This consistency serves as the evidence of our successful client selection.

\vspace{-4 mm}
\begin{figure}[H]
  \centering
  \includegraphics[scale=.45]{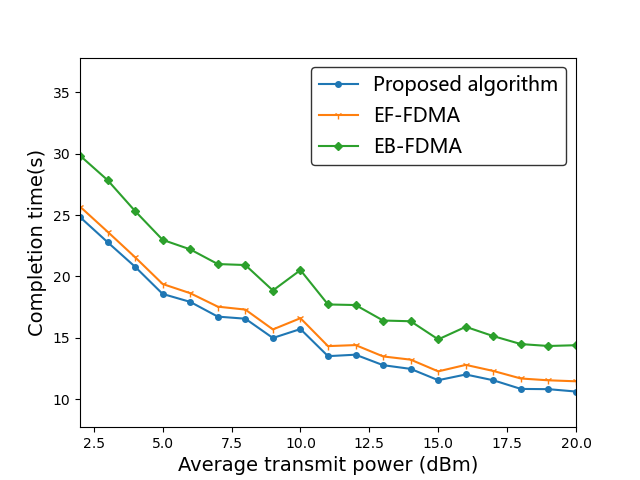} 
  \vspace{-2 mm}
  \caption{Training time versus average transmit power.}\label{equalBF}
\end{figure}
\vspace{-3 mm}

\vspace{-4 mm}
\begin{figure}[H]
  \centering
  \includegraphics[scale=.45]{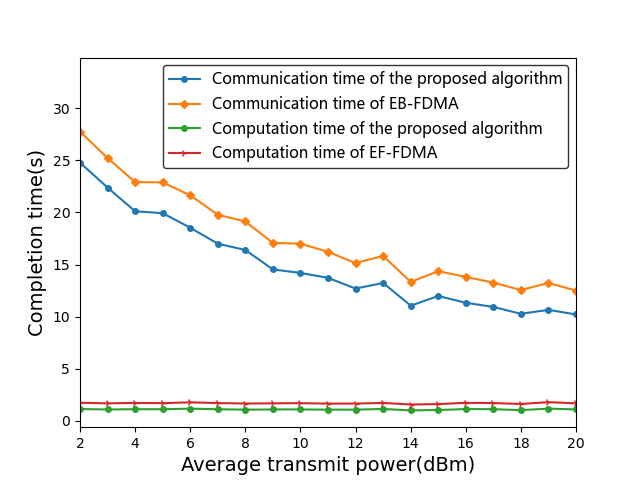} 
  \vspace{-2 mm}
  \caption{Communication and training time versus transmit power.} \label{communication and computation}
\end{figure}
\vspace{-3 mm}

Next, we conducted a comparative analysis of the proposed method against other resource allocation schemes. Specially, two benchmark schemes are implemented. The first one, termed as EF-FDMA, allocates the same computational frequency $f_1^\text{F}= ... =f_n^\text{F}$ to the DOs for performing their offloaded tasks. The second one, termed as EB-FDMA, assigns equal bandwidth $b_1 =...= b_n$ to the DOs to upload their data. In Fig.\ref{equalBF}, we illustrate the relationship between the transmit power of the DOs and the task completion time. Notably, the completion time for all algorithms decreases as the transmit power of the DOs increases. This reduce the transmission time between the DOs and the MO. In comparison, our proposed method exhibits superior performance to the other schemes. 
In Fig. \ref{communication and computation}, we show the changes in communication time (labeled 'Com') and computation time (labeled 'C') respectively, as the transmit power of each DO increases. The discernible gap between the EF-FDMA and the proposed method becomes evident.

The observed performance superiority of the proposed method is attributed to its ability to jointly optimize the communication and computing resources. The marginal difference between the EF-FDMA method and the proposed algorithm is due to the substantial disparity between transmission time and computation time. In cases where this difference is substantial, altering the frequency $f_n^\text{F}$ has minimal impact on the final completion time. This is consistent with the significant difference between transmission time and computation time, rendering adjustments to the computational frequency $f_n^\text{F}$ negligible in affecting the completion time.

\section{Conclusion}\label{sec5}

In this paper, we investigate task offloading mechanisms in federated learning. First, we introduce the motivation of using a collaborative computing framework to deal with uneven resource allocation in federated learning. Then, a collaborative computing framework is proposed to motivate rational users to participate in federated learning for the allocation of computational resources, bandwidth resources, etc. while guaranteeing accuracy in order to minimise the training time for each round. Next, we employ a low-complexity iterative algorithm to solve this optimisation problem. Finally, we discuss the simulation results, and the numerical results verify the feasibility of the scheme.

\backmatter

\begin{appendices}

\section{Proof of Lemma 1}\label{secA1}

Assuming different completion times, we can sort them as follows: $T_1 < T_2 < ... T_n = T^\ast$. Now, for $T_n$, allocating more bandwidth and more computing resources to it will reduce its time, resulting in the optimal time $T^\ast = T_{n-1}$. In the case of the min-max problem, the optimal solution depends on the worst T. Only by continuously optimizing the worst completion time until all Ts are equal, with no client able to reduce its completion time through resource allocation, can the optimal solution be achieved. Thus, the lemma is proved.

\section{Proof of Lemma 2}
\label{appendix:proof of lemma 2}

The first order and second order derivatives of $\mathcal{L}(\mathbf{F}, \lambda,\mu)$ with respect to ${f_n^\text{F}}$
respectively given by
\begin{equation}
    \frac{\partial \mathcal{L}(\mathbf{F}, \lambda,\mu)}{\partial f_n^\text{F}} = -\frac{C_nD_nI_n}{(f_n^\text{F})^2}+\lambda+\mu, \label{orderf1}
\end{equation}
and
\begin{equation}
     \frac{\partial^2\mathcal{L}(\mathbf{F}, \lambda,\mu)}{\partial (f_n^\text{F})^2} =\frac{2C_nD_nI_n}{(f_n^\text{F})^3}. \label{orderf2}
\end{equation}
From \eqref{orderf1}, we can see that in the domain of definition of $f_n^\text{F}$, $\frac{\partial^2\mathcal{L}\mathbf{F}, \lambda,\mu)}{\partial (f_n^\text{F})^2}>0$, so this dyadic function is concave and the optimal value of the original function can be obtained by Lagrange dule.
As for the optimal $(f_n^\text{F})^\ast$ of problem (18), according to KKT conditions,   
therefore, based on the first-order derivative condition , let $ \frac{\partial \mathcal{L}(\mathbf{F}, \lambda,\mu)}{\partial f_n^\text{F}} =0$,
we have 
\begin{align}
    -\frac{C_nD_nI_n}{(f_n^\text{F})^2}+\lambda+\mu=0, \\ \notag
 (f_n^\text{F})^\ast=\sqrt{\frac{C_nD_nI_n}{\lambda+\mu}}.
\end{align}

\section{Proof of Lemma 3}
\label{appendix:proof of lemma 3}
Based on the functional properties of Eq.\eqref{Tcom}, we can see that $p_n$ is its decreasing function, and next, we analyze the properties of $b_n$: 
 we can define function 
\begin{equation}
    z=\frac{1}{y\ln{\left (\frac{1}{1+y}  \right )} }, y>0.
\end{equation}
Then, we get 
\begin{align}
  & z'=\frac{\frac{1}{y+1}-\ln{(1+\frac{1}{y} )}  }{\left [ y\ln\left(\frac{1}{y+1}\right) \right ]^2 } , \label{dxy}\\ \notag
& z''=\frac{y\ln^2({1+\frac{1}{y} })+2\left [ (y+1) \left (\ln( {1+\frac{1}{y} })-\frac{1}{y+1}  \right ) \right ] ^2}{\left [y(1+y)\ln\left(\frac{1}{y+1}\right) \right ]^2 } > 0 .
\end{align}
According to \eqref{dxy}, $z''>0$ and $z'<0$  is constant, so z is an decreasing function, i.e., the object function in \eqref{Tcom} is an decreasing function of bandwidth $b_n$.

There is a functional relationship between $p_n$ and $b_n$, so we represent $b_n$ as a function of $p_n$:
\begin{equation}
    p_n(b)=\frac{N_0b_n}{g_n}\left ( 2^\frac{H_n}{T_n^\text{Tx}b_n} -1 \right ) . \label{function of pb}
\end{equation}
The first order and second order derivatives of $p_n$ can be given by
\begin{equation}
    \frac{\partial p_n}{\partial b_n} =\left ( \mathrm {e}^{\frac{(\ln_{}{2})H_n}{t_nb_n(\rho )}  } -1-\frac{(\ln_{}{2})H_n }{t_nb_n}\mathrm {e}^{\frac{(\ln_{}{2})H_n}{t_nb_n}}   \right ) ,\label{first order of bn}
\end{equation}
and
\begin{equation}
     \frac{\partial^2 p_n}{\partial b_n^2} =\frac{N_0(\ln{2})^2H_n^2 }{g_nt_n^2b_n^3}e^\frac{\ln(2)H_n}{t_nb_n}\ge 0. \label{second order of bn}
\end{equation}
According to $\frac{\partial^2 p_n}{\partial b_n^2}\ge 0$, we get that $ \frac{\partial p_n}{\partial b_n}$ is an increasing function of $b_n$. $\textstyle \lim_{b_n \to 0^+} \frac{\partial p_n}{\partial b_n}=0$, so we can get $\frac{\partial p_n}{\partial b_n}<0$. Therefore, $p_n$ in \eqref{function of pb} is a decreasing function of $b_n$. The \textbf{Lemma 3} has been proved.

\section{Proof of Lemma 4}
\label{appendix:proof of lemma 4}

Then we minimize $p_n$ to obtain optimal $b^\ast$, the optimization problem can be expressed as
\begin{align}
    \min _{b } \; & \; \frac{N_0b_n}{g_n}\left ( 2^\frac{H_n}{t_nb_n} -1 \right ), \label{minp to maximize b} \\
    \mbox{s.t.}\;\;\;\;\; &  \eqref{bn4}, \eqref{pmax4}, \\ \notag
    & b_n\ge b_n^\text{min}, \forall n \in \mathcal{N}. \label{constraint bnmin}
\end{align}
From $\frac{\partial^2 p_n}{\partial b_n^2}\ge 0$, the above minimize problem \eqref{minp to maximize b} is convex. Since $p_n(b)$ is a decreasing function of $b_n$,  When $p = p_n^\text{max}$, we get a lower bound for $b_n^\text{min}$ is 
\begin{equation}
    b_n \ge b_n^\text{min} \triangleq b_n^\text{min}=\frac{(\ln_{}{2})H_n }{T_n^\text{Tx}W\left (\frac{(\ln_{}{2}) N_0H_n}{g_np_n^\text{max} T_n^\text{Tx}} \textit {e}^{-\frac{(\ln_{}{2})N_0H_n }{g_np_n^\text{max}T_n^\text{Tx}} }\right ) - \frac{(\ln_{}{2}) N_0H_n}{g_np_n^\text{max} }}\label{solution bnmin}.
\end{equation}

By using the KKT conditions and Lagrange  method \cite{9264742}, we can solve problem \eqref{minp to maximize b}. The Lagrange function of \eqref{minp to maximize b} is
\begin{equation}
    \mathcal{F} \left ( \textbf{B},\rho \right ) = \frac{N_0b_n}{g_n}\left ( 2^\frac{H_n}{T_n^\text{Tx}b_n} -1 \right ) +\rho (\sum_{n=1}^{N}b_n-B) \label{Lagrange of b},
\end{equation}
where $\rho$ is the Lagrange multiplier. We find the first order derivative of  $ \mathcal{F} \left ( \textbf{b},\rho \right )$ and set the result to 0. Then we get $b_n(\rho )$ by \eqref{solution rho} 
\begin{equation}
    \frac{\partial \mathcal{F} \left ( \textbf{B},\rho \right )}{\partial b_n}= \frac{N_0}{g_n} \left ( \mathrm {e}^{\frac{(\ln_{}{2})H_n}{T_n^\text{Tx}b_n(\rho )}  } -1-\frac{(\ln_{}{2})H_n }{T_n^\text{Tx}b_n(\rho )}\mathrm {e}^{\frac{(\ln_{}{2})H_n}{T_n^\text{Tx}b_n(\rho )}}   \right )+\rho.
\end{equation}
and  $\rho$ satisfies
   \begin{equation}
       \sum_{n=1}^{N}\max \left \{ b_n(\rho ),b_n^\text{min} \right \}= B. \label{lagrangerho}
   \end{equation} 
The optimal solution is obtained by solving \eqref{the optimal of b}. The \textbf{Lemma 4} has been proved.




\end{appendices}


\bibliography{sn-bibliography}

\end{document}